\newtheorem{theorem}{Theorem} 
\newtheorem{definition}{Definition}
\newtheorem{assumption}{Assumption}
\title{Towards Learnable Anchor for Deep Multi-View Clustering}
\author{
    Bocheng Wang\textsuperscript{\rm 1}, Chusheng Zeng\textsuperscript{\rm 1}, Mulin Chen\textsuperscript{\rm 1}\thanks{Corresponding author.}, Xuelong Li\textsuperscript{\rm 2} \\
}
\begin{document}

\maketitle

\begin{abstract}
Deep multi-view clustering incorporating graph learning has presented tremendous potential. Most methods encounter costly square time consumption w.r.t. data size. Theoretically, anchor-based graph learning can alleviate this limitation, but related deep models mainly rely on manual discretization approaches to select anchors, which indicates that 1) the anchors are fixed during model training and 2) they may deviate from the true cluster distribution. Consequently, the unreliable anchors may corrupt clustering results. In this paper, we propose the Deep Multi-view Anchor Clustering (DMAC) model that performs clustering in linear time. Concretely, the initial anchors are intervened by the positive-incentive noise sampled from Gaussian distribution, such that they can be optimized with a newly designed anchor learning loss, which promotes a clear relationship between samples and anchors. Afterwards, anchor graph convolution is devised to model the cluster structure formed by the anchors, and the mutual information maximization loss is built to provide cross-view clustering guidance. In this way, the learned anchors can better represent clusters. With the optimal anchors, the full sample graph is calculated to derive a discriminative embedding for clustering. Extensive experiments on several datasets demonstrate the superior performance and efficiency of DMAC compared to state-of-the-art competitors.
\end{abstract}

\section{Introduction}


In real-world data acquisition, a sample is often recorded from different views or sources \cite{cui2024novel}, constituting multi-view data. For instance, an image can be distilled by multiple feature descriptors (e.g., color, shape, and spatial relation), and a document can be interpreted from multiple perspectives (e.g., topic distribution, word sequence, and word frequency). To adapt to the actual application environment, many types of unsupervised Multi-View Clustering (MVC) have emerged. Among them, deep MVC incorporates the enormous advantages of neural networks on representation learning, and has presented outstanding performance in real-world scenarios. 

In general, deep MVC sets up a separate auto-encoder for each view, and then the learned multiple view-specific embeddings are fused into a consensus to infer clusters. To mitigate the inter-view conflict and perceive multi-view consistency, many representation alignment strategies are leveraged, such as contrastive learning \cite{MFLVC, hu2023joint}, and label distribution alignment \cite{MAGCN, SDMVC, liu2024deep}. Overall, most deep MVC models emphasize the discriminability of the output embedding to improve clustering. 

Recently, some works integrate the graph learning theory into deep MVC, which considers the structure relationship between samples while learning the discriminative embedding \cite{DealMVC, GCFAgg, SURER}. These novel models capture the topological structure by constructing a data similarity graph, and thus optimizing representation learning based on graph data mining techniques, such as graph convolution network and structure preservation scheme \cite{wang2023multi, DFP-GNN}. These methods usually require computing the edge weights between any two samples to build a full sample graph, which leads to costly time complexity $O(n^2)$ where $n$ is the amount of samples. One theoretical solution to this problem is anchor graph learning that can accelerate the training to linear time w.r.t. $n$ \cite{CTCC, DMCAG}. The performance of anchor graph learning and final clustering heavily depends on the anchor quality. However, most deep models employ non-differentiable manual procedures (e.g., random selection and $k$-means) to determine the final anchors, which means the anchors are not learnable. Ideally, each sample should be represented by a certain anchor, and the anchors should reflect the cluster distribution.  If the selected anchors cannot represent the original samples and cluster centroids well, the clustering result may be adversely affected.

\begin{figure*}
	\center
	\includegraphics[width=0.98\textwidth]{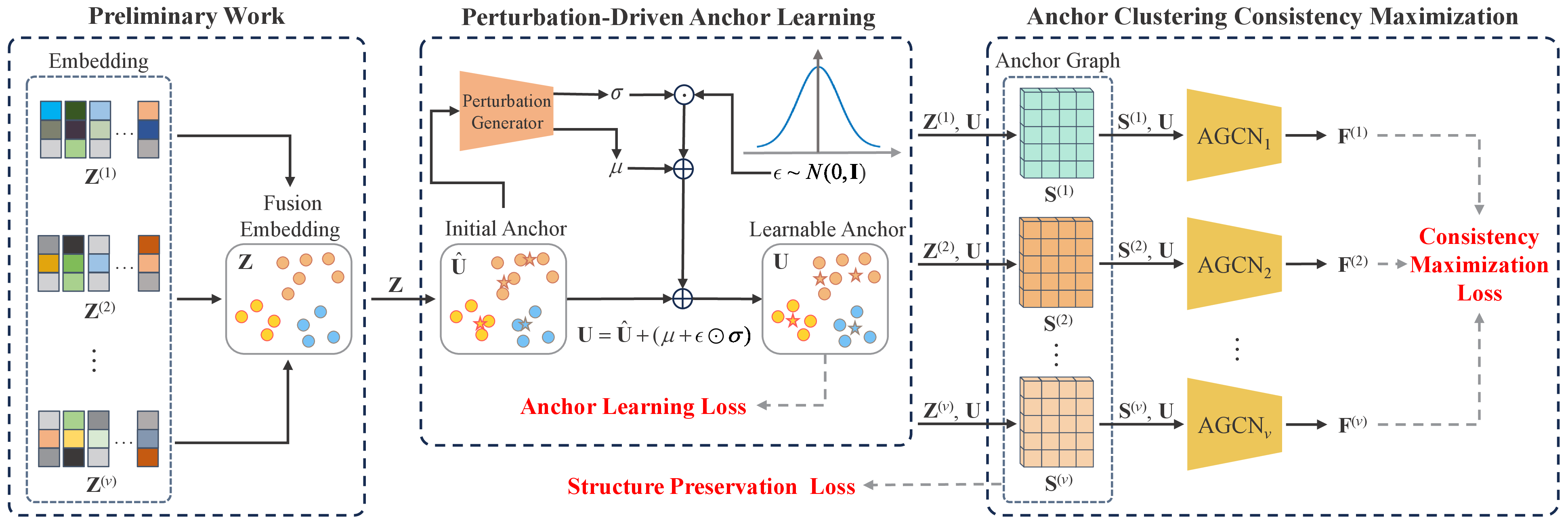} 
	\caption{Pipeline of DMAC. Note that the encoders are omitted. For the $a$-th view, $\mathbf{Z}^{(a)}$ is the data embedding, $\mathbf{A}^{(a)}$ is the anchor graph, $\mathrm{AGCN}_a$ is the corresponding anchor graph convolution network, and $\mathbf{F}^{(a)}$ is the anchor clustering distribution that records the probability of an anchor belonging to each cluster. $\mathbf{Z}$ is the shared fusion embedding among views. $\mathbf{U}$ represents the learnable anchors injected with the perturbation. The overall framework is updated by minimizing Eq. (\ref{eq:overall_loss}). The final result is gained by performing $k$-means on the convergent $\mathbf{Z}$.}
	\label{fig:flowchart}
\end{figure*}

To remedy the problems, we establish the Deep Multi-view Anchor Clustering (DMAC) model. 
As shown in Fig. \ref{fig:flowchart}, the pipeline of DMAC can be partitioned into three main items. Firstly, the positive-incentive perturbation is injected into the initial anchors to yield a learnable mechanism for the anchors.
Then, anchor graph convolution is used to produce the cluster indicator of anchors for each view, and the cross-view agreement is obtained by the proposed mutual information maximization loss. Finally, the learned anchors are leveraged to reveal the structural relationship between samples to impel a discriminative embedding. The key contributions of this paper are listed as follows.

\begin{itemize}
	\item Anchor graph learning is incorporated into the deep multi-view clustering framework. Compared to most competitors, the proposed model can capture structural information with learnable anchors in a high-efficiency linear time to ameliorate embedding learning.
	
	\item A perturbation-driven mechanism is proposed to improve the anchor quality adaptively during model training. By infusing beneficial perturbations, the learned anchors are adjusted to foster a distinct similarity relationship with the samples, thus representing the original data more comprehensively.
	
	\item An anchor graph convolution network is devised to infer the anchor clustering distribution of each view. The mutual information among multiple distributions is maximized to explore the consistent anchor clustering distribution, so as to produce clustering-oriented anchors. 
\end{itemize}

\section{Related Work}

\subsection{Deep Multi-View Clustering}

Benefiting from the powerful representation learning ability of neural networks, deep MVC has achieved dominant clustering performance in practical applications \cite{DMVC_survey2}. Generally, deep MVC leverages auto-encoders to learn a series of view-specific embeddings, and then executes representation fusion to infer the consensus cluster indications. Many models are proposed to promote the clustering-friendly deep embedding. DAMC \cite{DAMC} introduces the adversarial training mechanism to improve the discriminability of the learned embedding. DEMVC \cite{DEMVC} pursues a consistent cluster structure between views to exploit the cross-view complementary information. In \cite{alignment}, the importance of representation alignment for MVC is analyzed theoretically, and a contrastive learning based deep MVC method is established. MFLVC \cite{MFLVC} performs the feature- and cluster-level contrastive learning simultaneously to inhibit the adverse effects of view-private information. CVCL \cite{CVCL} advocates unifying the cluster assignment by multi-view contrastive learning. 

The abovementioned representatives have presented good clustering capacity in experiments. However, most models focus on the intrinsic features while neglecting the structure relationship among samples that is essential to detect the clustering distribution \cite{MvSCN, structure_importance, DCGL}. Recently, some scholars introduce the graph learning theory into deep MVC to mine the structural information explicitly. CMGEC \cite{CMGEC} employs a graph fusion network to integrate multiple structural graphs into a consensus. DFP-GNN \cite{DFP-GNN} parallels graph neural networks with auto-encoders to learn the feature- and structure-level embedding simultaneously. DealMVC \cite{DealMVC} constructs the data similarity graph to guide clustering-oriented contrastive learning. GCFAggMVC \cite{GCFAgg} utilizes the similarity relationship among embeddings to ameliorate representation fusion. SURER \cite{SURER} concatenates multiple view-specific graphs into a heterogeneous graph to explore the complementary relationship among views via the heterogeneous graph neural network. Those deep MVC models that incorporate the graph structure information have presented enormous development potential. Nevertheless, existing methods usually require computing the full sample graph, which encounters a squared complexity $O(n^2)$ where $n$ is the number of samples. In this paper, we plan to introduce the anchor theory to achieve deep MVC in linear time.

\subsection{Anchor-Based Multi-View Clustering}

Anchors, also known as landmarks, are widely used in graph-based and sub-space MVC \cite{ anchor_widely2}. An anchor is the representative of local data \cite{landmark}. By learning an anchor graph that records the adjacency relation between samples and anchors, the similarity among samples can be estimated approximately to derive the clusters. SFMC \cite{SFMC} fuses multiple anchor graphs into a consensus bipartite graph with the rank constraint. MSGL \cite{MSGL} uses the data self-expression property to realize the adaptive anchor selection and graph optimization. FDAGF \cite{FDAGF} allows multiple anchor combinations as inputs to improve the flexibility and generalization ability. E$^\mathrm{2}$OMVC \cite{EOMVC} calculates the spectral embeddings of anchor graphs and fuses them into the final cluster representation. CAMVC \cite{CAMVC} utilizes the estimated labels to optimize cluster-wise anchors. 

Since the number of anchors is much smaller than the sample size, anchor-based MVC has outstanding efficiency. Nevertheless, most relevant algorithms are limited by shallow graph learning that the anchor graph is calculated via the original features directly. There is little exploration \cite{CTCC, DMCAG} of anchor-based deep MVC, which neglects the optimization of anchor quality with model training. Inspired by the positive-incentive noise \cite{PNoise}, we plan to generate positive noise perturbation to guide high-quality anchor learning, and design the anchor graph convolution module to capture cross-view anchor clustering consistency.

\section{Methodology}

In this section, the proposed DMAC is elaborated. DMAC generates positive perturbation to ameliorate the anchors shared among views, and utilizes anchor graph convolution to extract the cluster distribution of anchors.

\textbf{Notations:} matrices and vectors are expressed as uppercase and lowercase letters, respectively. For a matrix $\mathbf{X}$, both $\mathbf{X}_{i}$ and $\mathbf{x}_{i}$ mean the $i$-th row. $||\mathbf{x}_{i}||_{1}$, $||\mathbf{x}_{i}||_{2}$, and $||\mathbf{X}||_{\mathrm{F}}$ denote $\ell_{1}$, $\ell_{2}$, and Frobenius norm, respectively. 

\subsection{Preliminary Work}

Denote \{$\mathbf{X}^{(1)}$, $\mathbf{X}^{(2)}$, $\cdots$, $\mathbf{X}^{(v)}$\} as the multi-view data with $n$ samples, $v$ views, and $c$ clusters. DMAC follows the mainstream deep MVC framework consisting of view-specific embedding learning and embedding fusion. In this part, the framework is introduced briefly to pave the subsequent innovative modules.

Specifically, the unshared encoder is used to learn view-specific deep embedding
\begin{equation}
	\begin{array}{c}
		\mathbf{Z}^{(a)}=\mathrm{Encoder}_a(\mathbf{X}^{(a)}). 
	\end{array}
\end{equation}

Based on the resultant embeddings \{$\mathbf{Z}^{(1)}$, $\mathbf{Z}^{(2)}$, $\cdots$, $\mathbf{Z}^{(v)}$\}, the simple but effective average weighting \cite{SURER} is employed to calculate the fusion embedding
\begin{equation}
	\label{eq:fuse}
	\begin{array}{c}
		\mathbf{Z} = \frac{1}{v} \sum \limits_{i}^v \mathbf{Z}^{(i)},
	\end{array}
\end{equation}
which is fed into a clustering algorithm to gain the labels.

Very lately, graph learning theory is introduced into the above framework, aiming to extract the topological structure of sample space to enhance embedding learning. Most related models encounter an expensive time complexity $O(n^2)$. Differently, we integrate anchor graph learning into deep MVC to reduce the complexity to linear time.

\subsection{Perturbation-Driven Anchor Learning}

Anchor graph learning requires estimating multiple anchors in advance. Existing models usually use the manual setting strategy to select anchors, which inhibits the learnability of anchors. Therefore, we construct a generator to learn the perturbation to adjust anchors, and design the anchor learning loss to obtain the positive perturbation that improves anchor quality.

\subsubsection{Perturbation Generation Network.} Denoting $\mathbf{\widehat{U}}$ as the $m$ initial shared anchors obtained by performing $k$-means on $\mathbf{Z}$, the generation network produces learnable perturbation $\varepsilon$ to inject $\mathbf{\widehat{U}}$. In this way, the anchors can be optimized by updating $\varepsilon$ through backpropagation.

To begin with, the initial perturbation $\epsilon$ is sampled from the standard multivariate Gaussian distribution
\begin{equation}
	\begin{array}{c}
		\epsilon \sim N(0, \mathbf{I}),
	\end{array}
\end{equation}
where $\mathbf{I}$ is an identity matrix, and $\epsilon$ has the same dimensionality as $\mathbf{\widehat{U}}$.

Then, we use a pseudo-siamese perceptron to simulate the mean $\mu$ and deviation $\sigma$ of the perturbation, which are formulated as
\begin{equation}
	\begin{array}{c}
		\mu = \mathrm{MLP}_{\mu}(\mathbf{\widehat{U}}), ~
		\sigma = \mathrm{MLP}_{\sigma}(\mathbf{\widehat{U}}).
	\end{array}
\end{equation}
Consequently, the perturbation $\varepsilon$ is updated as
\begin{equation}
	\label{eq:noise}
	\begin{array}{c}
		\varepsilon = \mu + \sigma \odot \epsilon,
	\end{array}
\end{equation}
where $\odot$ refers to the Hadamard product. Eq. (\ref{eq:noise}) meets the reparameterization trick \cite{trick} that optimizes $\varepsilon$ by backpropagation.
Finally, the anchor matrix is updated as 
\begin{equation}
	\begin{array}{c}
		\mathbf{U} = \mathbf{\widehat{U}} + \varepsilon.
	\end{array}
\end{equation}

Since $\varepsilon$ is optimized gradually during training, the learnability of the anchor matrix $\mathbf{U}$ is achieved.

\subsubsection{Anchor Learning Loss.} Given the learnable anchors, the relationship between samples and anchors is explored to improve the quality of anchors. 

Intuitively, the ideal anchors can be considered as sub-centroids, which are appropriately dispersed, and each sample is strongly associated with a corresponding anchor. Therefore, the similarity of the sample and anchors is crucial to evaluate the anchor quality. Denoting $q_{i}^{(a)} \in \mathbb{R}^{1\times m}$ as the similarity vector between the sample $\mathbf{Z}_i^{(a)}$ and $m$ anchors $\mathbf{U}$, $q_{ij}^{(a)}$ is computed as
\begin{equation}
	\label{eq:similarity}
	\begin{array}{c}
		q_{ij}^{(a)} = 
		\frac{\left(1+||\mathbf{Z}_i^{(a)} - \mathbf{U}_j||_2^2\right)^{-1}}{\sum \limits_{k}^m \left(1+||\mathbf{Z}_i^{(a)} - \mathbf{U}_k||_2^2\right)^{-1}}.
	\end{array}
\end{equation}

Then, we introduce the positive-incentive noise theory to pave the anchor learning loss. In other words, the perturbation $\varepsilon$ is regarded as the potential positive noise, which satisfies the definition in \cite{PNoise}.
\begin{definition}
	\label{def:pi}
	Mathematically, the positive noise $\epsilon_\pi$ satisfies
	\begin{equation}
		\label{eq:pinoise}
		\begin{array}{c}
			E(\mathcal{T} | \epsilon_\pi) < E(\mathcal{T}),
		\end{array}
	\end{equation}
	where $\mathcal{T}$ represents a specific downstream task, and $E(\cdot)$ computes the information entropy. 
\end{definition}

According to Definition \ref{def:pi}, the positive-incentive noise aims to reduce the uncertainty of a specific downstream task. Considering that the anchors are representatives of the original data, the uncertainty of anchor selection task is mainly reflected by the distribution of $q_i^{(a)}$. To be specific, for a sample $\mathbf{Z}_i^{(a)}$, if all values in  $q_i^{(a)}$ are very close, the relationship between the sample and anchors is ambiguous, which indicates a high uncertainty. That is to say, we need to learn a extremely unbalanced $q_i^{(a)}$, wherein one element is much larger than the others.

Therefore, for the $a$-th view, the task entropy $E(\mathcal{T} | \epsilon_\pi)$ in Eq. (\ref{eq:pinoise}) can be quantified as 
\begin{equation}
	\label{eq:anchor_learning}
	\begin{array}{c}
		{{\cal L}_{AL}^{(a)}} = \frac{1}{n} \sum \limits_{i}^{n} E\left(q_i^{(a)}\right) = -\frac{1}{n} \sum \limits_{i}^{n} \sum \limits_{j}^{m} q_{ij}^{(a)} \log \left(q_{ij}^{(a)}\right).
	\end{array}
\end{equation}

By minimizing Eq. (\ref{eq:anchor_learning}), each sample tends to hold a highly correlated anchor, such that the anchors can represent the data distribution well. In the following part, we aim to make the anchors aligned with the cluster distribution.

\subsection{Anchor Clustering Consistency Maximization}

To learn clustering-oriented anchors, we devise anchor graph convolution to infer the anchor clustering distribution of each view, and introduce mutual information to capture cross-view anchor clustering consistency, so as to provide training guidance for anchor learning.

\subsubsection{Anchor Graph Learning.} Since the proposed anchor graph convolution network requires graph data as input, we first present anchor graph learning.

Anchor graph records the structural dependence between samples and anchors. For the $a$-th view, an ideal anchor graph $\mathbf{S}^{(a)}$ respects the following assumption.
\begin{assumption}
	\label{ass:agl}
	The edge $s_{ij}^{(a)}$ is negatively correlated with the distance between the corresponding nodes $\mathbf{Z}^{(a)}_i$ and ${{\mathbf{U}}_j}$.
\end{assumption}
Assumption \ref{ass:agl} also reflects the fundamental clustering scenario, that points with small distances are more likely to be within the same cluster. Hence, for the $a$-th view, the anchor graph learning problem can be expressed as
\begin{equation}
	\label{eq:agl}
	\begin{array}{c}
		\mathop {\min }\limits_{\mathbf{S}^{(a)}} {\rm{ }}\mathop \sum \limits_{i}^n \sum \limits_{j}^m {\rm{||}}{\mathbf{Z}_i^{(a)}} - {{\mathbf{U}}_j}{\rm{||}}_2^2{s_{ij}^{(a)}} + \gamma {\rm{||}}\mathbf{S}^{(a)}{\rm{||}}_{\rm{F}}^2,\\
		s.t. \forall i ~ {\rm{ }}{\rm{ ||}}{\mathbf{s}_i^{(a)}}{\rm{|}}{{\rm{|}}_1} = 1,{\rm{ }}0 \le {\mathbf{s}_i^{(a)}} \le 1,
	\end{array}
\end{equation}
where $\mathbf{s}_i^{(a)}$ is the $i$-th row of the anchor graph $\mathbf{S}^{(a)} \in \mathbb{R}^{n\times m}$, and the second term evades that each sample only connects with the nearest anchor. The constraint that the sum of each row in $\mathbf{S}^{(a)}$ is 1 aims to approach Assumption \ref{ass:agl}. 

According to \cite{ULGE}, problem (\ref{eq:agl}) can be solved with an efficient closed-form solution, which also evades the selection of parameter $\gamma$.

\subsubsection{Anchor Graph Convolution.} Based on the view-specific anchor graph $\mathbf{S}^{(a)}$ and shared anchors $\mathbf{U}$, we introduce graph convolution \cite{GCN} to calculate the anchor clustering distribution, leading to Anchor Graph Convolution Network (AGCN). 

For the $a$-th view, the row in anchor clustering distribution $\mathbf{F}^{(a)} \in \mathbb{R}^{m\times c}$ records the probability of an anchor belonging to each cluster. Specifically, the forward propagation of $l$-th hidden layer in $\mathrm{AGCN_a}$ is
\begin{equation}
	\label{eq:agcn}
	\begin{array}{c}
		{\mathbf{F}^{{(a)}(l + 1)}} = \varphi \left(\mathbf{D}_\mathbf{S^{\mathnormal{(a)}}}^{-1} \left(\mathbf{S}^{(a)}\right)^{\mathrm{T}}\mathbf{S}^{(a)}\mathbf{F}^{{(a)}(l)}
		\mathbf{W}^{(l)}\right),
	\end{array}
\end{equation}
where ${\mathbf{D}_{\mathbf{S}^{(a)}}} \in \mathbb{R}^{m\times m}$ is the diagonal degree matrix of $\mathbf{S}^{(a)}$ that the $j$-th item is $\sum _i^n s_{ij}^{(a)}$, $\left( \mathbf{D}_\mathbf{S^{\mathnormal{(a)}}}^{-1} \left(\mathbf{S}^{(a)}\right)^{\mathrm{T}}\mathbf{S}^{(a)}\right) \in \mathbb{R}^{m\times m}$ is the symmetric and doubly stochastic anchor similarity graph \cite{zhang2022non} that conforms to the criterion of GCN, $\mathbf{W}^{(l)}$ is the parameter matrix, and $\varphi(\cdot)$ is a certain activation function. Note that $\mathbf{F}^{{(a)}(0)} = {{\mathbf{U}}}$. 

The neuron number of the last layer in AGCN is the cluster number $c$, and the softmax function is used for activation. Multiple AGCNs do not share parameters to learn the anchor cluster structure for each view.

\subsubsection{Consistency Maximization Loss.}

We adopt Mutual Information (MI) to measure the difference among \{$\mathbf{F}^{(1)}$, $\mathbf{F}^{(2)}$, $\cdots$, $\mathbf{F}^{(v)}$\}. Compared to the widespread KL divergence, MI satisfies symmetry to increase computational efficiency. The large MI, the more similar the two distributions is. Based on MI, the consistency maximization loss for the $a$-th view is
\begin{equation}
	\label{eq:cm_loss}
	\begin{array}{c}
		{{\cal L}_{CM}^{(a)}} = - \frac{1}{m} \sum \limits_{b=a+1}^v \sum \limits_{i}^m MI \left(\mathbf{F}^{(a)}_i, \mathbf{F}^{(b)}_i \right),
	\end{array}
\end{equation}
where
\begin{equation}
	\begin{array}{c}
		MI \left(\mathbf{F}^{(a)}_i, \mathbf{F}^{(b)}_i \right) = \sum \limits_{x \sim \mathbf{F}^{(a)}_i} \sum \limits_{y \sim \mathbf{F}^{(b)}_i} p\left(x,y\right) log\left(\frac{p\left(x,y\right)}{p(x)p(y)}\right).
	\end{array}
\end{equation}

By minimizing Eq. (\ref{eq:cm_loss}), multiple anchor clustering distributions are aligned to relieve the adverse effects of view conflict and view-private information. Unlike existing deep MVC, the module achieves cross-view consensus from the perspective of anchors rather than samples, which is conducive to propelling an explicit cluster structure of anchors.

\subsection{Structure Preservation via Anchor Graph}

In this part, the structural graph of samples is calculated via the learned anchor graph. On this basis, the structure preservation loss is developed to promote a discriminative multi-view embedding.  

Based on the anchor graph $\mathbf{S}^{(a)}$, the full sample graph of the $a$-view can be measured with
\begin{equation}
	\label{eq:full_graph}
	\begin{array}{c}
		\mathbf{G}^{(a)} = \mathbf{S}^{(a)} {\mathbf{D}^{{-1}}_{\mathbf{S}^{(a)}}}
		\left(\mathbf{S}^{(a)}\right)^{\mathrm{T}}.
	\end{array}
\end{equation}
Obviously, the resultant $\mathbf{G}^{(a)} \in \mathbb{R}^{n\times n}$ is a symmetric and doubly stochastic graph \cite{FDAGF}.

With the $a$-th full sample graph $\mathbf{G}^{(a)}$, the structure preservation loss is
\begin{equation}
	\label{eq:sp_loss}
	\begin{array}{c}
		{{\cal L}_{SP}^{(a)}} = \sum \limits_{i,j}^n {\rm{||}}{\mathbf{Z}_i} - {\mathbf{Z}_j}{\rm{||}}_\mathrm{2}^\mathrm{2}{g_{ij}^{(a)}},
	\end{array}
\end{equation}
which can be replaced with Eq. (\ref{eq:sp_loss_speed}) to accelerate the matrix multiplication. There is a concern that Eq. (\ref{eq:sp_loss}) may trivially cause all samples to be mapped to the same embedding, which is called representation collapse. In Theorem \ref{the:trivial}, we deduce that the anchor learning loss can be seen as a regularization term to penalize the trivial solution.

By minimizing Eq. (\ref{eq:sp_loss}), the learned fusion embedding $\mathbf{Z}$ is actuated to reserve the internal data structure of each view, that is, the samples in the same class remain compact. The complementary structural information across views is mined to ameliorate a discriminative fusion embedding for clustering performance improvement.

\subsection{Joint Loss and Optimizer}


Combining Eqs. (\ref{eq:anchor_learning}), (\ref{eq:cm_loss}), and (\ref{eq:sp_loss}), the joint loss is
\begin{equation}
	\label{eq:overall_loss}
	\begin{array}{ll}
		{\cal L} 
		=\sum \limits_{a}^v \left({{\cal L}_{AL}^{(a)}} + \alpha {{\cal L}_{CM}^{(a)}} + \beta {{\cal L}_{SP}^{(a)}} \right) ,
	\end{array}
\end{equation}
where both $\alpha$ and $\beta$ are the trade-off parameters. 

The classical RMSprop optimizer \cite{RMSprop} is adopted to train DMAC. The final result is obtained by performing $k$-means \cite{kmeans} on the fusion representation $\mathbf{Z}$.

\section{Discussion and Analysis}

\subsection{Theoretical Advantage of Anchor Learning Loss}

In this part, we discuss that the proposed anchor learning loss shown in Eq. (\ref{eq:anchor_learning}) is beneficial for relieving representation collapse. The anchor learning loss can be regarded as a regularization term of the structure preservation loss shown in Eq. (\ref{eq:sp_loss}) to boost a discriminative fusion embedding $\mathbf{Z}$.

\begin{theorem}
	\label{the:trivial}
	Minimizing Eq. (\ref{eq:anchor_learning}) is equivalent to penalizing the trivial solution (i.e., representation collapse) to Eq. (\ref{eq:sp_loss}).
\end{theorem}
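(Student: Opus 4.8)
The plan is to show that the trivial (collapsed) minimizer of Eq.\ (\ref{eq:sp_loss}) coincides with the global \emph{maximizer} of Eq.\ (\ref{eq:anchor_learning}), so that adding ${\cal L}_{AL}^{(a)}$ to the joint objective Eq.\ (\ref{eq:overall_loss}) acts precisely as a penalty against that solution, and conversely that driving ${\cal L}_{AL}^{(a)}$ toward its minimum is incompatible with collapse. First I would rewrite the structure preservation loss in Laplacian form: since $\mathbf{G}^{(a)}$ is symmetric with nonnegative entries, ${\cal L}_{SP}^{(a)} = 2\,\mathrm{tr}(\mathbf{Z}^{\mathrm{T}}\mathbf{L}^{(a)}\mathbf{Z})$ with $\mathbf{L}^{(a)}$ the graph Laplacian of $\mathbf{G}^{(a)}$. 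This is nonnegative and vanishes iff $\mathbf{Z}$ is constant on every connected component of $\mathbf{G}^{(a)}$; the trivial solution named in the theorem is the extreme case $\mathbf{Z}_1=\cdots=\mathbf{Z}_n$, which gives ${\cal L}_{SP}^{(a)}=0$ for any $\mathbf{G}^{(a)}$. Because $\mathbf{Z}$ is produced by encoding and then averaging (Eq.\ (\ref{eq:fuse})), this collapse is attained exactly when the encoders degenerate to constant maps, i.e., $\mathbf{Z}^{(a)}_1=\cdots=\mathbf{Z}^{(a)}_n$ in every view as well.

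Second, I would evaluate $q^{(a)}$ and ${\cal L}_{AL}^{(a)}$ at this configuration. When $\mathbf{Z}^{(a)}_i$ is independent of $i$, the distance $||\mathbf{Z}^{(a)}_i-\mathbf{U}_j||_2^2$ in Eq.\ (\ref{eq:similarity}) is the same for every sample, so $q^{(a)}_i$ is one fixed vector; moreover the initial anchors $\mathbf{\widehat{U}}$ are the $k$-means centroids of a single repeated point and hence coincide, so up to the reparameterized perturbation $\varepsilon$ the $m$ distances are equal and $q^{(a)}_{ij}\to 1/m$. Therefore $E(q^{(a)}_i)\to\log m$, the maximum of the entropy over the probability simplex, and ${\cal L}_{AL}^{(a)}\to\log m=\max {\cal L}_{AL}^{(a)}$. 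Conversely, pushing ${\cal L}_{AL}^{(a)}$ toward its minimum forces each $q^{(a)}_i$ to become one-hot, which requires samples to sit near distinct anchors; since the anchors descend from $k$-means on $\mathbf{Z}$ they cannot all coincide unless $\mathbf{Z}$ has collapsed, so a near-minimal ${\cal L}_{AL}^{(a)}$ rules out collapse. This two-sided correspondence is the precise content of the claimed equivalence: minimizing Eq.\ (\ref{eq:anchor_learning}) is the same as adjoining to Eq.\ (\ref{eq:sp_loss}) a term that is maximal exactly on its trivial solution.

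To close the loop structurally I would also note that the anchor graph $\mathbf{S}^{(a)}$ of Eq.\ (\ref{eq:agl}) and the similarity $q^{(a)}$ are both monotone decreasing in the same distances $\{||\mathbf{Z}^{(a)}_i-\mathbf{U}_j||_2^2\}$, and the ULGE closed form makes each row $\mathbf{s}^{(a)}_i$ supported on the few nearest anchors (this is also how Assumption \ref{ass:agl} is realized); thus a low-entropy $q^{(a)}_i$ corresponds to $\mathbf{s}^{(a)}_i$ concentrating on essentially one anchor. Writing $g^{(a)}_{ij}=\sum_k s^{(a)}_{ik}s^{(a)}_{jk}/d_k$ with $d_k=\sum_i s^{(a)}_{ik}$ shows $i$ and $j$ are linked in $\mathbf{G}^{(a)}$ only when they share a positive-weight anchor, so the concentration induced by minimizing ${\cal L}_{AL}^{(a)}$ makes $\mathbf{G}^{(a)}$ block-structured and ${\cal L}_{SP}^{(a)}$ then enforces only intra-anchor compactness rather than global collapse. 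The main obstacle is bridging the notational gap between the per-view embeddings $\mathbf{Z}^{(a)}$ that enter $q^{(a)}$ and the fused $\mathbf{Z}$ that enters ${\cal L}_{SP}^{(a)}$, and making the step $q^{(a)}_{ij}\to 1/m$ rigorous despite the perturbation $\varepsilon$ and the non-uniqueness of $k$-means on degenerate input; I expect this to need a mild genericity assumption (distinct anchors, $\mathbf{G}^{(a)}$ connected away from the trivial solution) rather than a hard computation.
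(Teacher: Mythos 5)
Your proposal is correct and follows essentially the same route as the paper: at the collapsed solution the anchors coincide, so every $q_i^{(a)}$ becomes uniform, $E(q_i^{(a)})$ attains its maximum $\log m$, and hence $\mathcal{L}_{AL}^{(a)}$ is maximal exactly at the trivial solution --- the paper states this as the contrapositive $\left(\mathbf{Z}\rightarrow\mathbf{Z}^{*}\right)\Rightarrow\left(\mathcal{L}_{AL}^{(a)}\rightarrow\log m \not\rightarrow 0\right)$. Your caveat about the perturbation (that $\mathbf{U}=\mathbf{\widehat{U}}+\varepsilon$ need not collapse even when $\mathbf{\widehat{U}}$ does) flags a genuine subtlety that the paper's own proof silently passes over, but it does not change the argument's structure.
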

\begin{proof}
	Without loss of generality, we develop the proof from the perspective of the $a$-th view. The converse-negative proposition corresponding to the theorem is
	\begin{equation}
		\label{eq:th1}
		\begin{array}{c}
			\left(\mathbf{Z} \rightarrow \mathbf{Z^{*}} \right) \Rightarrow \left({{\cal L}_{AL}^{(a)}} \not\rightarrow 0 \right),
		\end{array}
	\end{equation}
	where $\mathbf{Z^{*}}$ is the trivial solution to Eq. (\ref{eq:sp_loss}), that is, all rows in $\mathbf{Z}$ tend to be the same.
	
	Because $\mathbf{U} \sim \mathbf{Z}$ (i.e., $\mathbf{U}$ is sampled from $\mathbf{Z}$), all anchors also tend to be the same when $\mathbf{Z} = \mathbf{Z^{*}}$. Hence, the probability distribution $q_i^{(a)}$ between sample $\mathbf{Z}_i^{(a)}$ and anchor matrix $\mathbf{U}$ is very smooth, and then the anchor learning loss reaches the upper bound. The above deduction can be formulized as
	\begin{equation}
		\begin{array}{c}
			\left(\mathbf{Z} \rightarrow \mathbf{Z^{*}} \right) \Rightarrow \left(\mathbf{U} \rightarrow \mathbf{U^{*}} \right) \Rightarrow 
			\left(\forall i ~ \forall j ~ q_{ij}^{(a)} \rightarrow \frac{1}{m}\right) \\
			\Rightarrow \left({{\cal L}_{AL}^{(a)}} \rightarrow log(m) \not\rightarrow 0\right).
		\end{array}
	\end{equation}
	
	The original proposition and converse-negative proposition possess the same truth and falsehood property. Proposition (\ref{eq:th1}) is true, so the theorem is proven. The proof can be generalized to any view easily.
\end{proof}

\subsection{Linear Computation Complexity}

DMAC is able to accomplish MVC in the linear time complexity $O(n)$. To avoid excessive symbol definition and improve readability, we only analyze the influence of sample size $n$ and anchor number $m$.

In each forward propagation, the computation complexity of embedding learning and fusion is $O(n)$. Then, the initial anchor selection needs $O(nm)$ via $k$-means, and the generator requires $O(m)$ to output the perturbation matrix. Finally, considering that $\mathbf{D}_\mathbf{S^{\mathnormal{(a)}}}^{-1}$ is a diagonal matrix, the consumption of anchor graph convolution is $O(nm)$.

In each back propagation, the anchor learning loss needs $O(nm)$ to calculate the entropies of all rows in $\mathbf{Q}$. The structure preservation loss can be written as the trace form
\begin{equation}
	\label{eq:sp_loss_speed}
	\begin{array}{c}
		\mathrm{Tr}\left(\mathbf{Z}^{\mathrm{T}}\mathbf{D}_\mathbf{G}^{(a)}\mathbf{Z} - \mathbf{Z}^{\mathrm{T}} \mathbf{S}^{(a)} {\mathbf{D}^{{-1}}_{\mathbf{S}^{(a)}}} \left(\mathbf{S}^{(a)}\right)^{\mathrm{T}}\mathbf{Z}  \right),
	\end{array}
\end{equation}
where $\mathbf{D}_\mathbf{G}^{(a)} \in \mathbb{R}^{n\times n}$ is the degree matrix of $\mathbf{G}^{(a)} $ shown in Eq. (\ref{eq:full_graph}). Since $\mathbf{G}^{(a)}$ is a doubly stochastic matrix, $\mathbf{D}_\mathbf{G}^{(a)} \in \mathbb{R}^{n\times n}$ is an identity matrix. The calculation consumption of structure preservation loss is also $O(nm)$ with Eq. (\ref{eq:sp_loss_speed}). Finally, the consistency maximization loss needs $O(m)$. 

In conclusion, the time complexity of DMAC is $O(nm)$. Normally, the quantity of anchors is much smaller than the sample size (i.e., $m \ll n$), so the average complexity of each iteration can be seen as $O(n)$.

\section{Experiments}

In this section, the proposed DMAC is compared with advanced competitors. The ablation analysis is also conducted.

\subsection{Real-World Datasets}

\begin{table}[t]
	\tabcolsep=0.09cm
	\centering
	\small
	\begin{tabular}{ccccl}
		\toprule
		\textbf{Dataset} & \textbf{Samples} & \textbf{Views} & \textbf{Classes} & \textbf{Dimensions}\\ 
		\midrule
		\textbf{Yale} & 165 & 3 & 15 & 4096, 3304, 6750 \\
		\textbf{PIE} & 680 & 3 & 68 & 484, 256, 279 \\
		\textbf{BBC} & 685 & 4 & 5 & 4659, 4633, 4665, 4684 \\
		\textbf{NUS} & 2400 & 6 & 10 & 64, 144, 73, 128, 225, 500 \\
		\textbf{CCV} & 6773 & 3 & 20 & 4000, 5000, 5000 \\
		\textbf{ALOI} & 10800 & 4 & 100 & 77, 13, 64, 125 \\
		\bottomrule
	\end{tabular}
	\caption{Descriptions of real-world datasets.}
	\label{tab:datasets}
\end{table}

\begin{table*}[t]
	\tabcolsep=0.18cm
	\centering
	\small
	\begin{tabular}{ccc|cc|cc|cc|cc|cc|cc}
		\toprule
		\makecell[c]{\multirow{2.5}{*}{Method}} & \multicolumn{2}{c}{Yale} & \multicolumn{2}{c}{PIE} & \multicolumn{2}{c}{BBC} & \multicolumn{2}{c}{NUS} & \multicolumn{2}{c}{CCV} & \multicolumn{2}{c}{ALOI} & \multicolumn{2}{c}{Avg}  \\
		\cmidrule{2-15} 
		\makecell[c]{\multirow{2.5}{*}{}} & \makecell[c]{ACC} & \makecell[c]{NMI} & \makecell[c]{ACC} & \makecell[c]{NMI} & \makecell[c]{ACC} & \makecell[c]{NMI} & \makecell[c]{ACC} & \makecell[c]{NMI} & \makecell[c]{ACC} & \makecell[c]{NMI} & \makecell[c]{ACC} & \makecell[c]{NMI} & \makecell[c]{ACC} & \makecell[c]{NMI} \\
		\midrule
		\makecell[c]{GMC} & \makecell[c]{69.70} & \makecell[c]{70.06} & \makecell[c]{21.18} & \makecell[c]{44.24} & \makecell[c]{69.05} & \makecell[c]{47.87} & \makecell[c]{18.24} & \makecell[c]{9.96} & \makecell[c]{10.66} & \makecell[c]{0.43} & \makecell[c]{57.05} & \makecell[c]{73.50} & \makecell[c]{40.98} & \makecell[c]{41.01} \\
		\makecell[c]{MSGL} & \makecell[c]{40.61} & \makecell[c]{47.32} & \makecell[c]{15.74} & \makecell[c]{46.08} & \makecell[c]{46.28} & \makecell[c]{23.15} & \makecell[c]{15.25} & \makecell[c]{5.27} & \makecell[c]{12.42} & \makecell[c]{7.11} & \makecell[c]{15.81} & \makecell[c]{39.66} & \makecell[c]{24.35} & \makecell[c]{28.10}\\
		\makecell[c]{LMVSC} & \makecell[c]{57.58} & \makecell[c]{58.10} & \makecell[c]{\underline{36.32}} & \makecell[c]{63.33} & \makecell[c]{66.42} & \makecell[c]{53.92} & \makecell[c]{20.67} & \makecell[c]{8.58} & \makecell[c]{18.29} & \makecell[c]{14.09} & \makecell[c]{\underline{58.51}} & \makecell[c]{\textbf{76.37}} & \makecell[c]{42.97} & \makecell[c]{45.73}\\
		\makecell[c]{UDBGL} & \makecell[c]{53.33} & \makecell[c]{58.76} & \makecell[c]{24.26} & \makecell[c]{52.72} & \makecell[c]{72.85} & \makecell[c]{50.94} & \makecell[c]{24.08} & \makecell[c]{13.11} & \makecell[c]{25.57} & \makecell[c]{20.83} & \makecell[c]{52.44} & \makecell[c]{61.02} & \makecell[c]{42.09} & \makecell[c]{42.90}\\
		\midrule
		\makecell[c]{CMGEC} & \makecell[c]{36.36}  & \makecell[c]{42.60} & \makecell[c]{14.77} & \makecell[c]{45.33} & \makecell[c]{\underline{87.37}} & \makecell[c]{\underline{71.44}} & \makecell[c]{24.87} & \makecell[c]{10.83} & \makecell[c]{22.21} & \makecell[c]{23.67} & \makecell[c]{56.42} & \makecell[c]{72.89} & \makecell[c]{40.33} & \makecell[c]{44.46}\\
		\makecell[c]{DealMVC} & \makecell[c]{\underline{75.18}} & \makecell[c]{\underline{76.81}} & \makecell[c]{23.82} & \makecell[c]{52.14} & \makecell[c]{64.75} & \makecell[c]{41.20} & \makecell[c]{20.04} & \makecell[c]{9.49} & \makecell[c]{13.95} & \makecell[c]{6.87} & \makecell[c]{17.50} & \makecell[c]{44.50} & \makecell[c]{35.87} & \makecell[c]{38.50}\\
		\makecell[c]{GCFAggMVC} & \makecell[c]{66.06} & \makecell[c]{66.51} & \makecell[c]{27.94} & \makecell[c]{59.15} & \makecell[c]{63.65} & \makecell[c]{48.87} & \makecell[c]{23.42} & \makecell[c]{10.69} & \makecell[c]{\underline{35.43}} & \makecell[c]{\underline{32.92}} & \makecell[c]{54.52} & \makecell[c]{72.21} & \makecell[c]{\underline{45.17}} & \makecell[c]{48.39}\\
		\makecell[c]{DFP-GNN} & \makecell[c]{56.36} & \makecell[c]{63.39} & \makecell[c]{24.26} & \makecell[c]{56.88} & \makecell[c]{75.09} & \makecell[c]{58.73} & \makecell[c]{\textbf{29.42}} & \makecell[c]{16.12} & \makecell[c]{21.33} & \makecell[c]{19.36} & \makecell[c]{49.15} & \makecell[c]{66.12} & \makecell[c]{42.60} & \makecell[c]{46.77}\\
		\makecell[c]{SURER} & \makecell[c]{61.82} & \makecell[c]{67.68} & \makecell[c]{30.29} & \makecell[c]{\underline{64.16}} & \makecell[c]{79.85} & \makecell[c]{64.24} & \makecell[c]{27.33} & \makecell[c]{\underline{16.17}} & \makecell[c]{24.91} & \makecell[c]{26.86} & \makecell[c]{43.94} & \makecell[c]{64.03} & \makecell[c]{44.69} & \makecell[c]{\underline{50.52}}\\
		\midrule
		\makecell[c]{DMAC} & \makecell[c]{\textbf{78.18}} & \makecell[c]{\textbf{78.06}} & \makecell[c]{\textbf{43.24}} & \makecell[c]{\textbf{68.16}} & \makecell[c]{\textbf{88.61}} & \makecell[c]{\textbf{74.49}} & \makecell[c]{\underline{29.29}} & \makecell[c]{\textbf{16.20}} & \makecell[c]{\textbf{36.18}} & \makecell[c]{\textbf{33.17}} & \makecell[c]{\textbf{60.35}} & \makecell[c]{\underline{74.29}} & \makecell[c]{\textbf{55.98}} & \makecell[c]{\textbf{57.40}}\\
		\bottomrule
	\end{tabular}
	\caption{Clustering performance of ten methods on six datasets. Bold and underlined values mean the optimal and sub-optimal results respectively. The column termed avg displays the average ACC and NMI of each method.}
	\label{tab:comparsion_performance}
\end{table*}


Six public real-world datasets that are widely used in clustering study are collected as benchmarks, including image-type Yale \cite{Yale}, PIE \cite{PIE} and ALOI \cite{ALOI}, text-type BBC \cite{BBC} and NUS \cite{NUSWIDE}, and video-type CCV \cite{CCV}. Each sample is preprocessed with the $\ell_{2}$ norm normalization. Table \ref{tab:datasets} displays the basic information of each dataset. 

\begin{figure}[t]
	\center
	\begin{minipage}[t]{0.43\textwidth}
		\centering
		\includegraphics[width=1\textwidth]{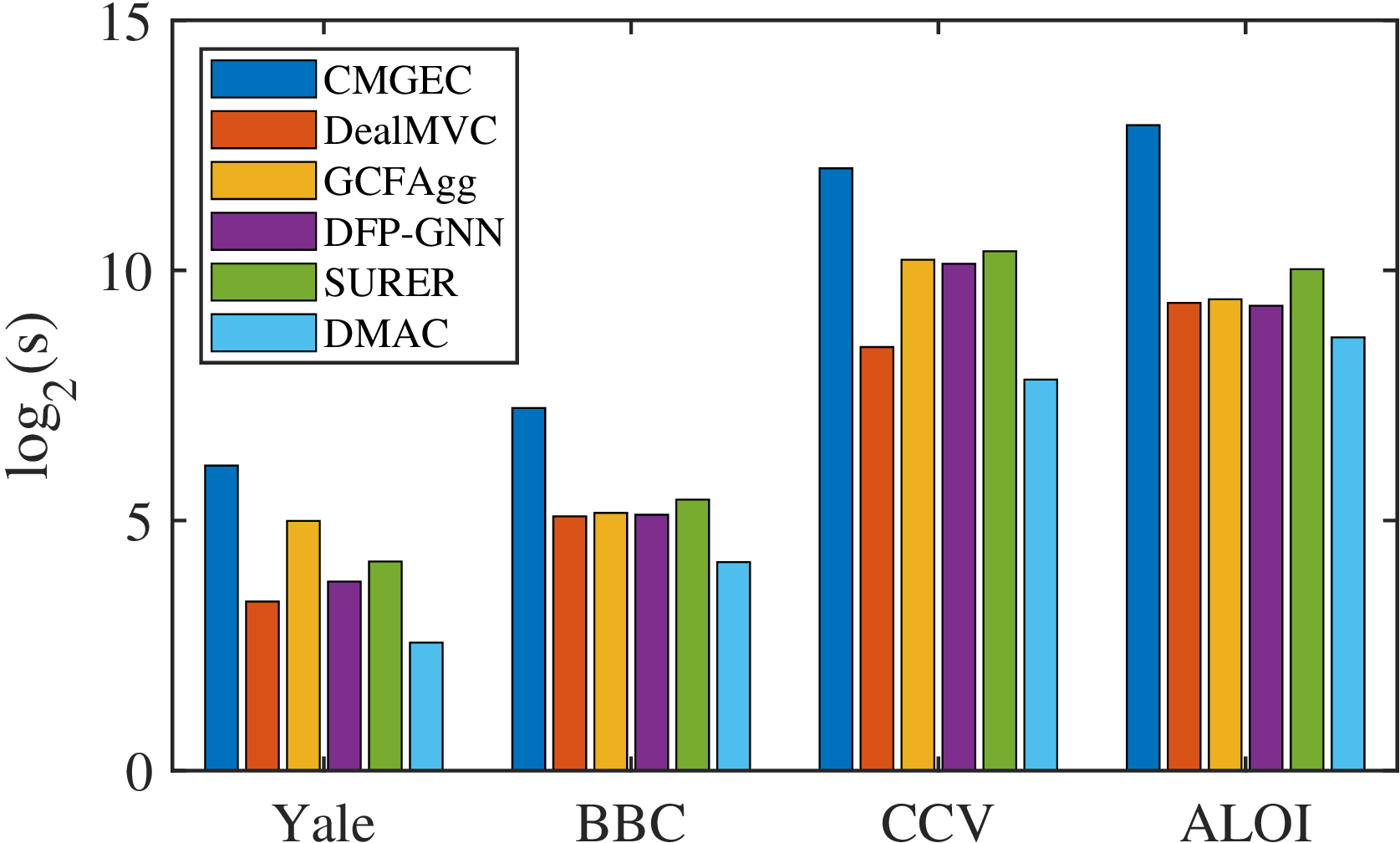}
	\end{minipage}
	\caption{Runtime (s) of deep models on four datasets. Note that all records are converted by logarithmic base 2.}
	\label{fig:speed}
\end{figure} 

\subsection{Evaluation Metrics}

Two widespread metrics are adopted to quantify the clustering result, including Accuracy (ACC) and Normalized Mutual Information (NMI). Both ACC and NMI are positively correlated with the clustering performance. The mathematical expression can be found in \cite{metric}. 

\subsection{Comparison with Competitors}

\subsubsection{Competitors.} Nine state-of-the-art methods are selected as competitors, including four shallow algorithms GMC \cite{GMC}, MSGL \cite{MSGL}, LMVSC \cite{LMVSC} and UDBGL \cite{UDBGL}, and five deep models CMGEC \cite{CMGEC}, DealMVC \cite{DealMVC}, GCFAggMVC \cite{GCFAgg}, DFP-GNN \cite{DFP-GNN} and SURER \cite{SURER}. Among them, MSGL, LMVSC and UDBGL are anchor-based MVC, and all deep models incorporate the graph structure information.

\subsubsection{Setups.} The grid search is used to explore the optimal parameter setup for each algorithm.
The parameter grid of competitors are set as the recommendations in the original article. For example, the parameter $\alpha$ of MSGL is selected from \{0.001, 0.01, 0.1, 1, 10, 50\}. For the proposed DMAC, the number of anchors $m$ is set automatically according to \cite{KMM}, i.e., the lower bound of $\sqrt{n \times c}$. The grid for both $\alpha$ and $\beta$ is \{$10^{-3}$, $10^{-2}$, $10^{-1}$, $1$, $10^{1}$ ,$10^{2}$, $10^{3}$\}. The maximal iterations are 100.

The traditional methods are executed on Matlab 2019a with an Intel i9-12900HX CPU. All deep models are implemented via PyTorch, and trained with a NVIDIA RTX-3090 GPU. Each algorithm is repeated 10 times for objectivity. 

\subsubsection{Performance Comparison.}

\begin{figure}[t]
	\center
	\begin{minipage}[t]{0.15\textwidth}
		\centering
		\includegraphics[width=1.03\textwidth]{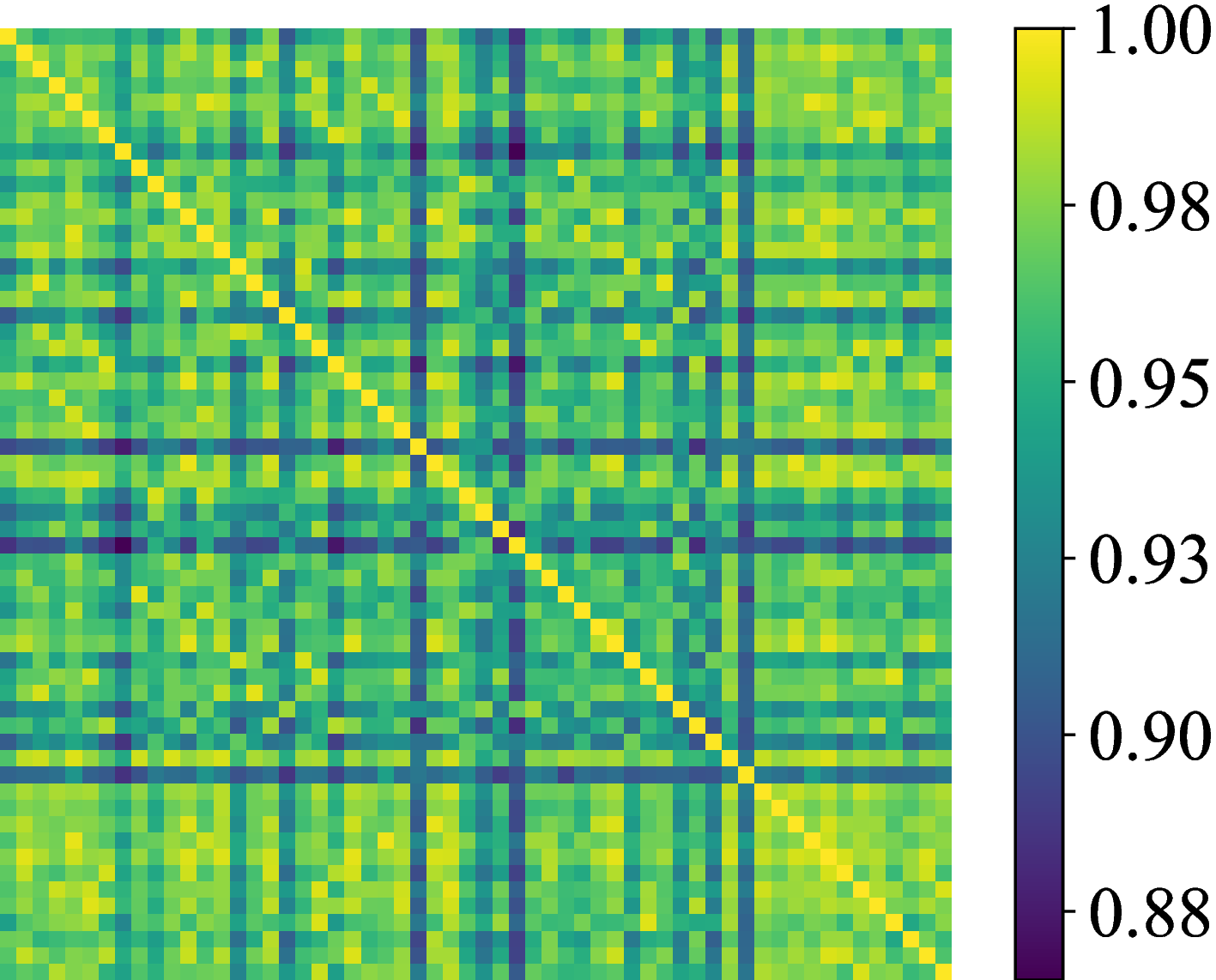}
		\centerline{wo/PD}
	\end{minipage}
	\hspace{0.1pt}
	\begin{minipage}[t]{0.15\textwidth}
		\centering
		\includegraphics[width=1\textwidth]{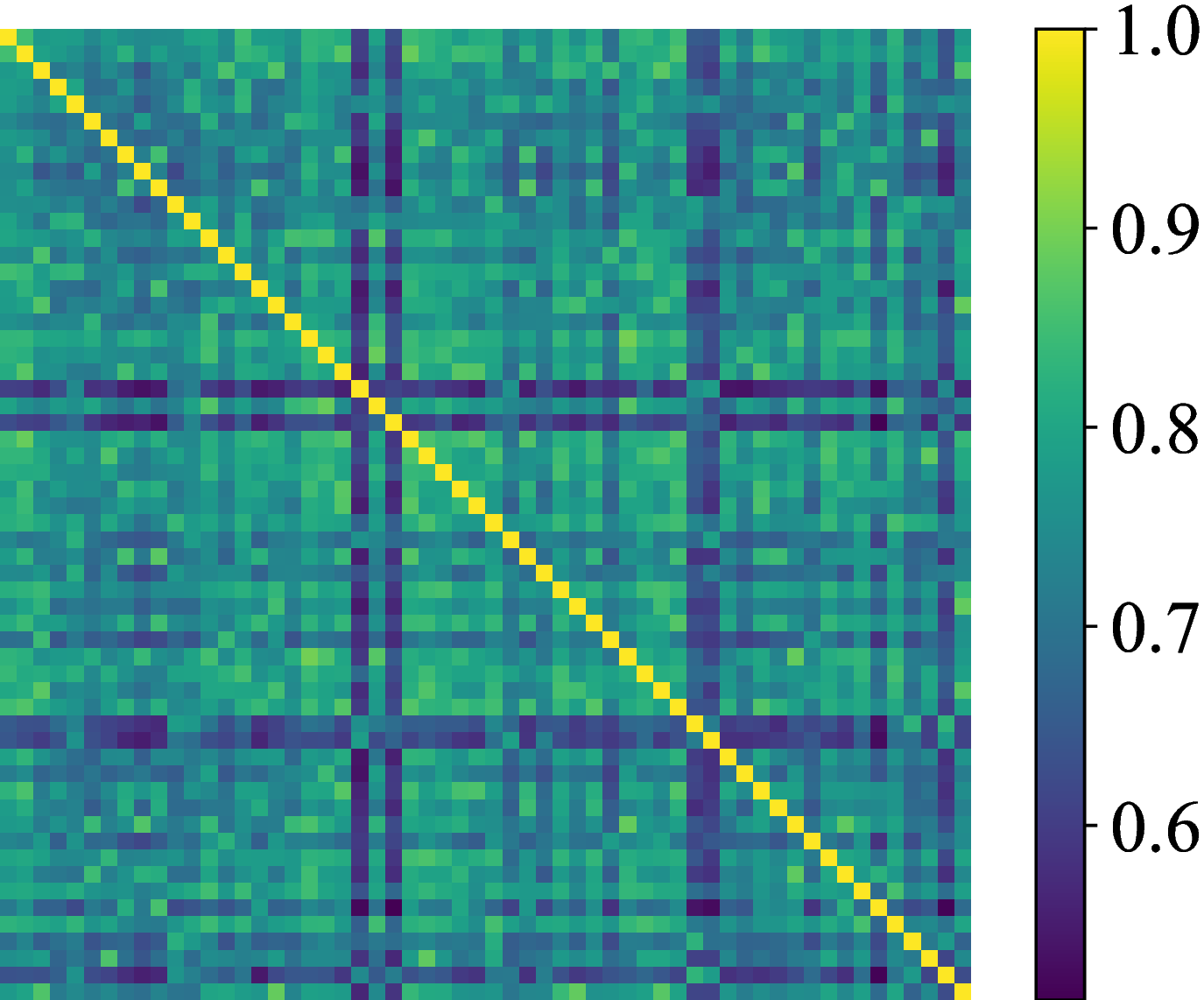}
		\centerline{wo/CM}
	\end{minipage}
	\begin{minipage}[t]{0.15\textwidth}
		\centering
		\includegraphics[width=1\textwidth]{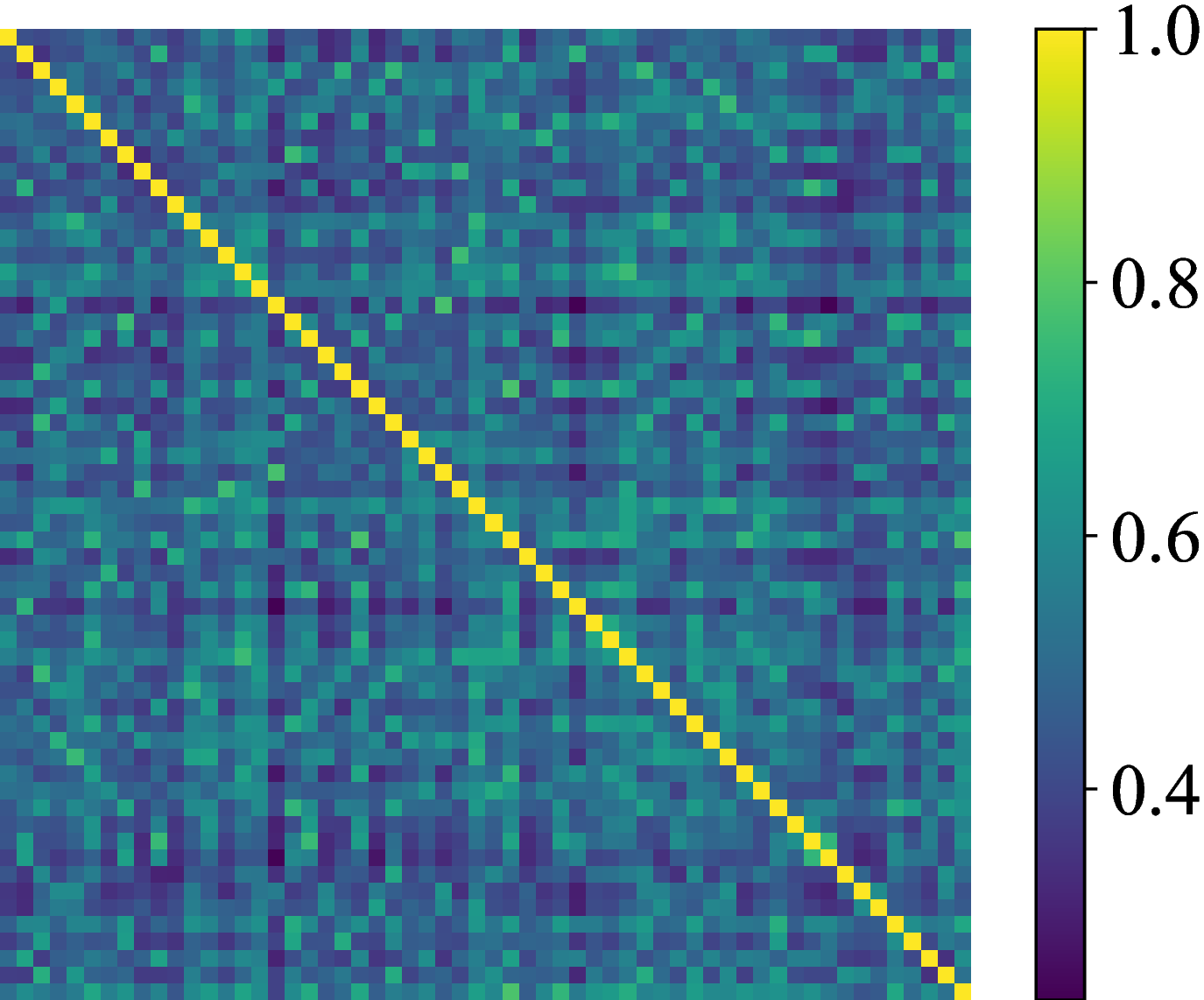}
		\centerline{DMAC}
	\end{minipage}
	\caption{Anchor similarity matrix $\mathbf{U}\mathbf{U}^{\mathrm{T}}$ on BBC. }
	\label{fig:simAnchor}
\end{figure}

Table \ref{tab:comparsion_performance} records the clustering performance of all algorithms. For ease of comparison, we also calculate the average ACC and NMI of each algorithm on all datasets. In general, DMAC presents the best clustering ability. The success of DMAC proves the feasibility of applying anchor graph learning to deep MVC. DMAC learns high-quality anchors with the proposed perturbation-driven anchor learning scheme, and then mines the multi-view anchor clustering consistency via anchor graph convolution and mutual information maximization to further accelerate clustering-oriented anchors, so as to accurately reveal the structural graph for clustering improvement. According to the experimental results, we also summarize the following viewpoints. Firstly, compared with the traditional shallow methods, the deep models achieve better clustering scores, which indicates the enormous potential of neural networks on improving MVC. Secondly, the performance of GCFAggMVC and SURER are more prominent than other deep methods. SURER and GCFAggMVC leverage data similarity graphs to guide heterogeneous graph embedding learning and feature aggregation respectively, which reflects the positive effects of structural information on the two key steps of deep MVC, namely, view-specific representation learning and multi-view representation fusion. 

\subsubsection{Efficiency Comparison.} 


Fig. \ref{fig:speed} displays the clustering efficiency of deep models. It is observed that DMAC has the shortest runtime. Compared with advanced deep MVC models that incorporate graph structure learning, DMAC avoids inefficient full sample graph learning and graph convolution. The new anchor learning mechanism and anchor graph convolution network have linear time complexity theoretically, so as to speed up the training process. 

\subsection{Ablation Study and Visualization}

\begin{table}[t]
	\centering
	\small
	\begin{tabular}{c|c|ccc}
		\toprule
		Dataset & Metric & wo/PD & wo/CM & DMAC \\ 
		\midrule
		\multirow{2}{*}{Yale} & ACC & 66.02 & 72.73 & \textbf{78.18} \\ 
		\multirow{2}{*}{} & NMI & 66.90 & 75.15 & \textbf{78.06} \\ 
		\midrule
		\multirow{2}{*}{PIE} & ACC & 34.12 & 33.82 & \textbf{43.24} \\ 
		\multirow{2}{*}{} & NMI & 62.94 & 62.97 & \textbf{68.16} \\ 
		\midrule
		\multirow{2}{*}{BBC} & ACC & 87.15 & 88.47 & \textbf{88.61} \\ 
		\multirow{2}{*}{} & NMI & 70.68 & 73.99 & \textbf{74.49} \\ 
		\midrule
		\multirow{2}{*}{NUS} & ACC & 24.53 & 27.67 & \textbf{29.29} \\ 
		\multirow{2}{*}{} & NMI & 14.29 & 15.35 & \textbf{16.20} \\ 
		\midrule
		\multirow{2}{*}{CCV} & ACC & 32.28 & 34.26 & \textbf{36.18}   \\ 
		\multirow{2}{*}{} & NMI & 29.96 & 31.72 & \textbf{33.17}  \\ 
		\midrule
		\multirow{2}{*}{ALOI} & ACC & 44.45 & 54.79 & \textbf{60.35}  \\ 
		\multirow{2}{*}{} & NMI & 65.86 & 70.10 & \textbf{74.29}  \\ 
		\bottomrule
	\end{tabular}
	\caption{Ablation results of main modules in DMAC. Bold values emphasize the optimal results.}
	\label{tab:ablation_performance}
\end{table}

\begin{figure}[t]
	\center
	\begin{minipage}[t]{0.14\textwidth}
		\centering
		\includegraphics[width=1\textwidth]{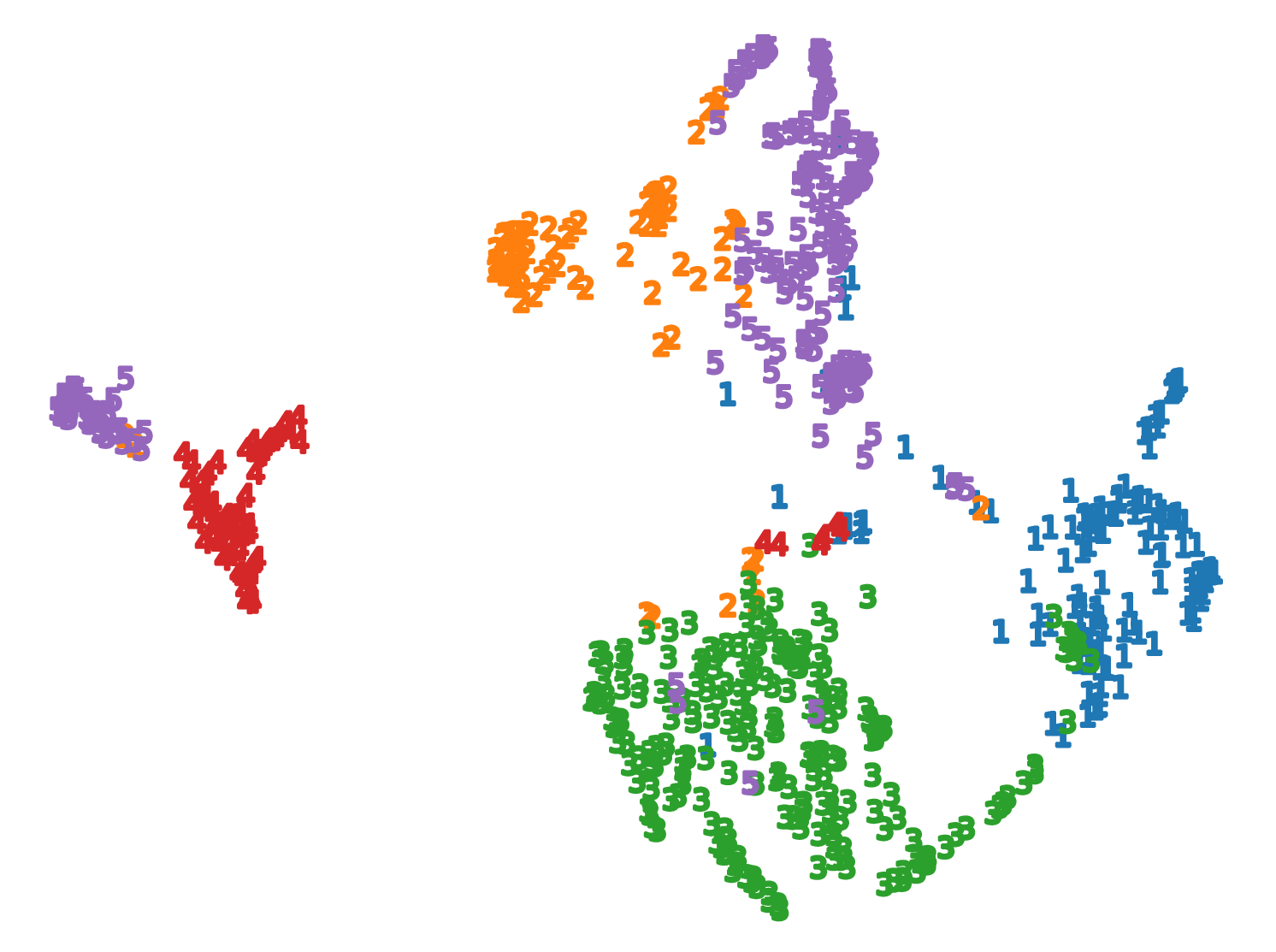}
		\centerline{wo/PD}
	\end{minipage}
	\hspace{3pt}
	\begin{minipage}[t]{0.14\textwidth}
		\centering
		\includegraphics[width=1\textwidth]{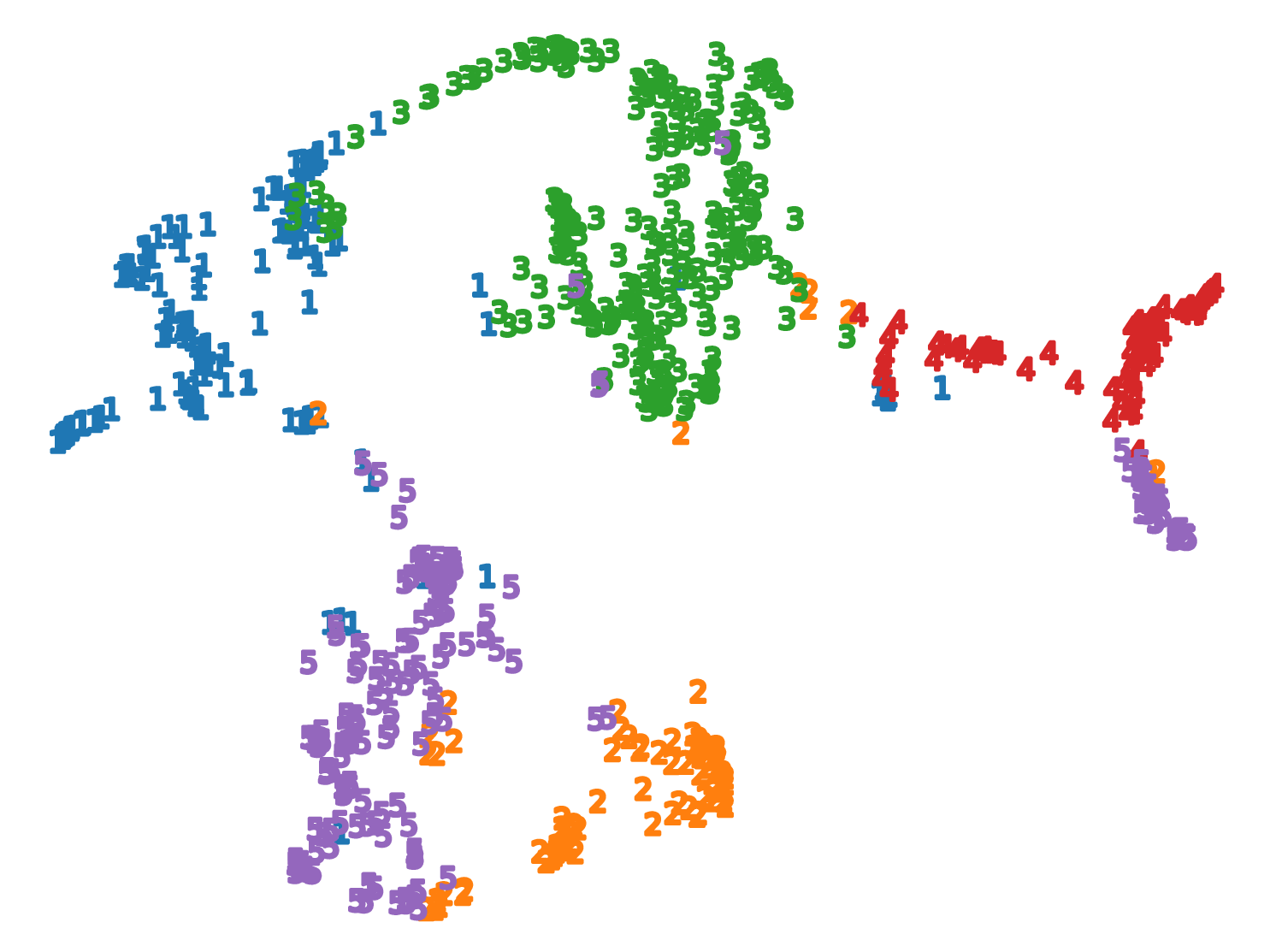}
		\centerline{wo/CM}
	\end{minipage}
	\hspace{3pt}
	\begin{minipage}[t]{0.14\textwidth}
		\centering
		\includegraphics[width=1\textwidth]{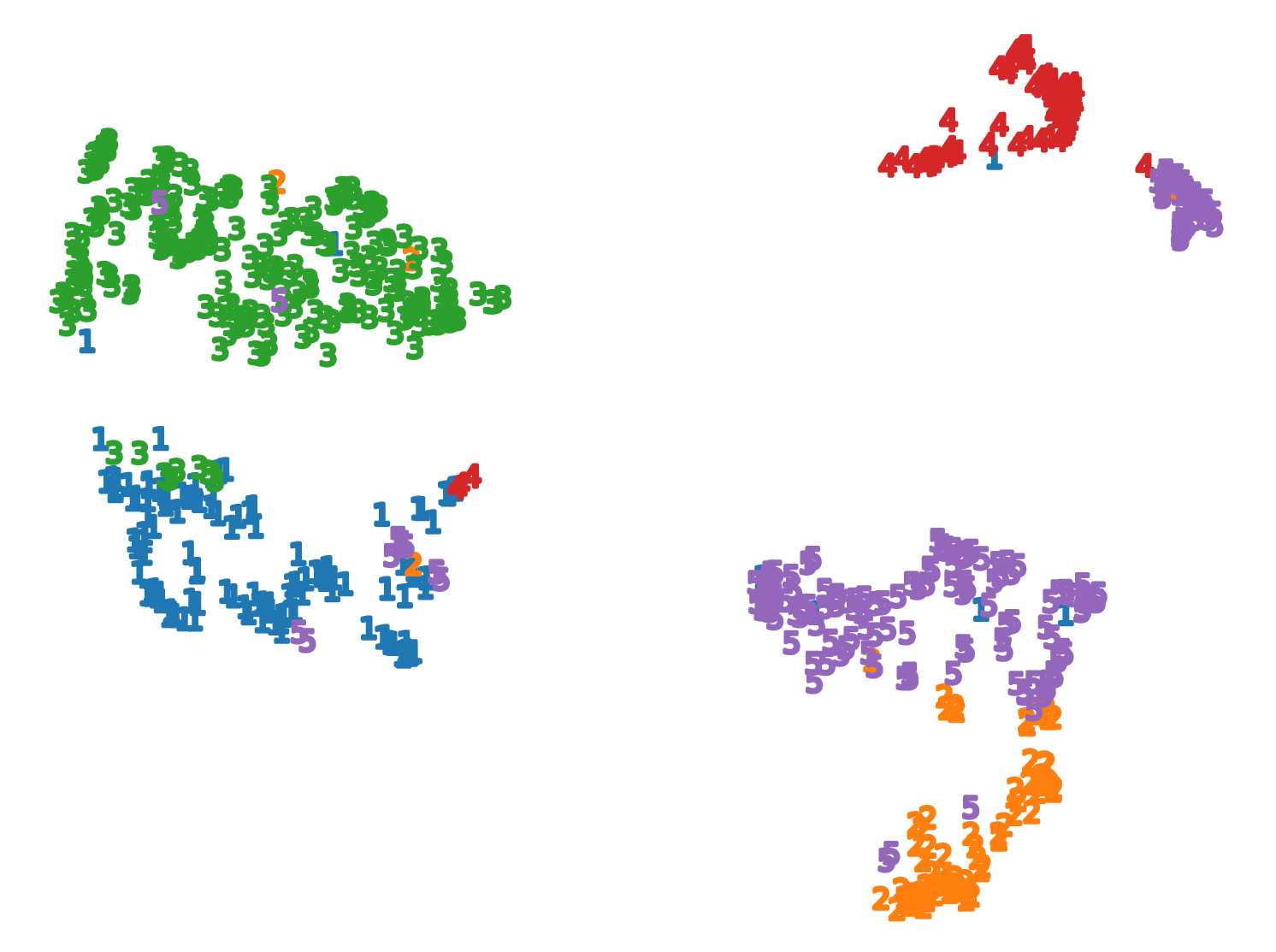}
		\centerline{DMAC}
	\end{minipage}
	\caption{Visualization of fusion embedding $\mathbf{Z}$ on BBC. Each point is drawn as its actual label value.}
	\label{fig:simFuse}
\end{figure}

In the ablation experiment, we design two variants based on the complete DMAC. Concretely, wo/PD removes the perturbation generation network and anchor learning loss, and wo/CM suspends the consistency maximization loss. Table \ref{tab:ablation_performance} shows the ablation comparison. DMAC still maintains the best clustering performance, which proves the positive role of the proposed modules.

To intuitively display the effects of new modules, we visualize the similarity matrix of anchors (i.e., $\mathbf{U}\mathbf{U}^{\mathrm{T}}$) in Fig. \ref{fig:simAnchor}. It is exhibited that the anchor similarity matrix learned by DMAC has the most distinct diagonal, which means the anchors are relatively dispersive to adequately represent the sample clusters. The significant degradation of wo/PD compared to DMAC further proves the advantages of learnable anchors. The disparity between wo/CM and DMAC indicates that mining cross-view anchor clustering information is beneficial to guide clustering-oriented anchor learning for performance improvement. 

In addition, we utilize UMAP \cite{umap} to visualize the learned fusion embedding $\mathbf{Z}$ on BBC. As shown in Fig. \ref{fig:simFuse}, the result derived by DMAC has a more obvious inter-class partition, which means a small inter-class similarity. The ablation comparison again indicates that, the new modules are conducive to learn high-quality and clustering-friendly anchors for accurate structure learning, so as to improve the discriminative ability of fusion embedding via the structure preservation loss. The advantage of DMAC over wo/PD reflects the practicability of Theorem \ref{the:trivial}, that is, the proposed anchor learning loss can suppress representation collapse.

\subsection{Parameter Sensitivity} 

\begin{figure}[t]
	\center
	\begin{minipage}[t]{0.22\textwidth}
		\centering
		\includegraphics[width=1\textwidth]{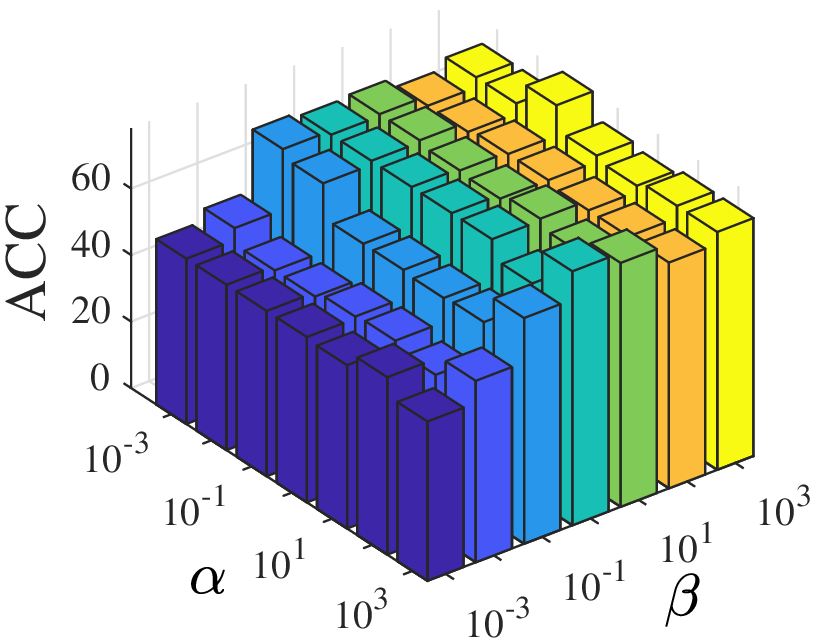}
		\centerline{Yale}
	\end{minipage}
	\hspace{3pt}
	\begin{minipage}[t]{0.22\textwidth}
		\centering
		\includegraphics[width=1\textwidth]{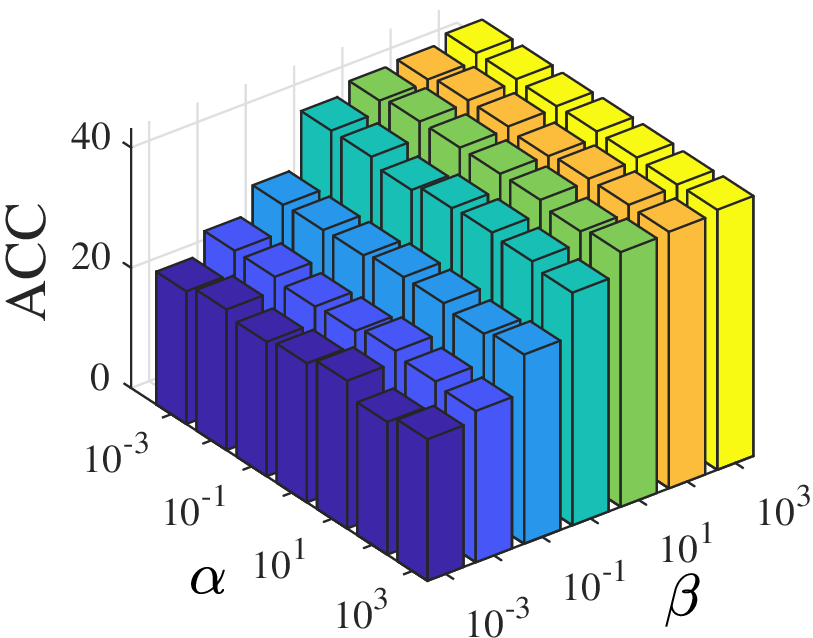}
		\centerline{PIE}
	\end{minipage}
	\center
	\begin{minipage}[t]{0.22\textwidth}
		\centering
		\includegraphics[width=1\textwidth]{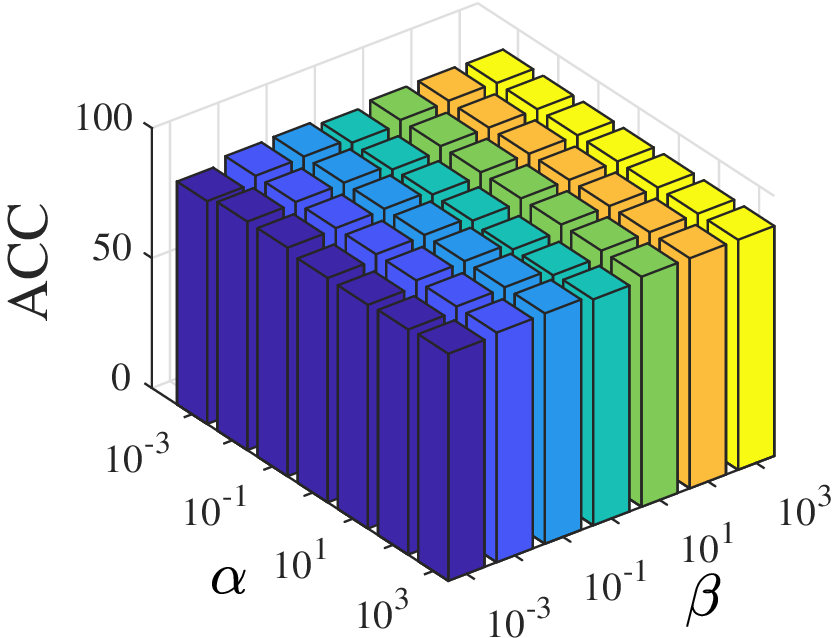}
		\centerline{BBC}
	\end{minipage}
	\hspace{3pt}
	\begin{minipage}[t]{0.22\textwidth}
		\centering
		\includegraphics[width=1\textwidth]{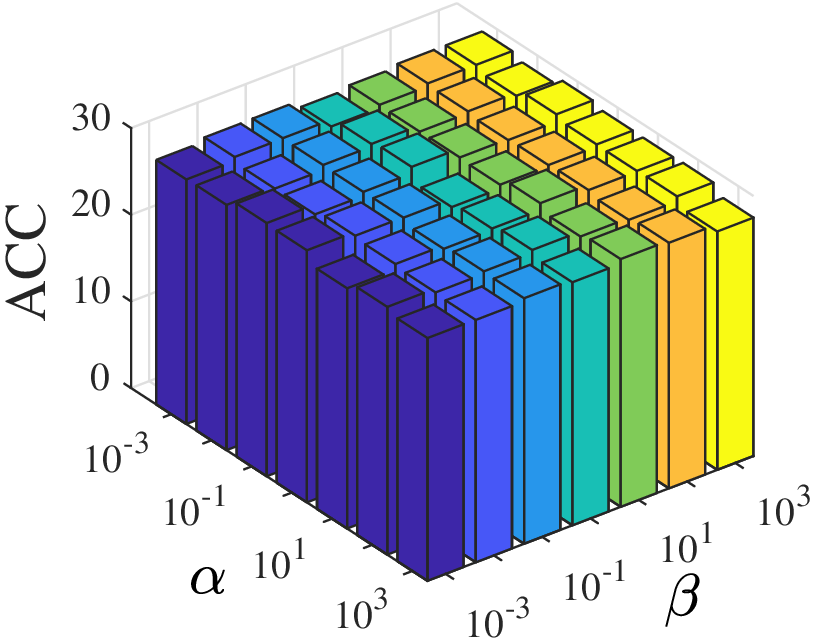}
		\centerline{NUS}
	\end{minipage}
	\caption{ACC of DMAC with different parameters $\alpha$ and $\beta$.}
	\label{fig:par_sensitivity}
\end{figure}

Finally, the sensitivity of DMAC to the trade-off parameters $\alpha$ and $\beta$ is explored. Fig. \ref{fig:par_sensitivity} exhibits ACC of DMAC under the predefined parameter grid. It can be seen that the influence of $\beta$ is more significant than that of $\alpha$, because $\beta$ is directly related to the fusion representation $\mathbf{Z}$ that derives the final result. The preliminary observation suggests that the combination of large $\alpha$ and $\beta$ is more likely to facilitate a good clustering result. Overall, the performance fluctuation is relatively smooth within an appropriate range. 

\section{Conclusion}

In this paper, we propose an anchor-based deep multi-view clustering model termed DMAC. 
Different from traditional manual anchor selection ways, DMAC introduces a perturbation-driven anchor learning mechanism to make the anchors learnable. 
Specifically, inspired by the positive-incentive noise theory, a noise generation network is established to produce the perturbation adaptively, which is injected into anchors under the guidance of anchor learning loss. 
Besides, the anchor graph convolution module is designed to extract the cluster structure of anchors within each view, and then the multi-view anchor clustering consistency can be perceived with mutual information maximization.
In this manner, DMAC is able to optimize the anchors during the training procedure, and pursue a desired anchor distribution for clustering.
Theoretical analysis shows that DMAC has a linear time complexity $O(n)$. Experiments report the superior performance and efficiency of DMAC.

\section{Acknowledgments}

This work was supported by the National Key Research and Development Program of China (Grant No: 2022ZD0160803), and the National Natural Science Foundation of China (Grant No: 61871470).

\bibliography{aaai25}

\end{document}